\documentclass[sigconf]{acmart}

\settopmatter{printacmref=false} 
\renewcommand\footnotetextcopyrightpermission[1]{} 
\pagestyle{plain} 
\AtBeginDocument{%
  \providecommand\BibTeX{{%
    \normalfont B\kern-0.5em{\scshape i\kern-0.25em b}\kern-0.8em\TeX}}}

\acmDOI{}

\makeatletter
\renewcommand\@formatdoi[1]{\ignorespaces}
\makeatother

\usepackage{algorithm}
\usepackage{algpseudocode}

\usepackage{amsthm}
\theoremstyle{plain}
\newtheorem{theorem}{Theorem}[section]
\newtheorem{definition}[theorem]{Definition}
\newtheorem{example}[theorem]{Example}
\newtheorem{remark}[theorem]{Remark}

\def\*#1{\mathbf{#1}}
\def\~#1{\boldsymbol{#1}}


%




\begin{document}

\title{Can I Trust the Explanations? Investigating Explainable Machine Learning Methods for Monotonic Models}

\author{Dangxing Chen}
\affiliation{%
  \institution{Kunshan Duke University}
  \city{Kunshan}
  \country{China}}
\email{dangxing.chen@dukekunshan.edu.cn}


\begin{abstract}
  In recent years, explainable machine learning methods have been very successful. Despite their success, most explainable machine learning methods are applied to black-box models without any domain knowledge. By incorporating domain knowledge, science-informed machine learning models have demonstrated better generalization and interpretation. But do we obtain consistent scientific explanations if we apply explainable machine learning methods to science-informed machine learning models? This question is addressed in the context of monotonic models that exhibit three different types of monotonicity. To demonstrate monotonicity, we propose three axioms. Accordingly, this study shows that when only individual monotonicity is involved, the baseline Shapley value provides good explanations; however, when strong pairwise monotonicity is involved, the Integrated gradients method provides reasonable explanations on average. 
\end{abstract}

\begin{CCSXML}
<ccs2012>
 <concept>
  <concept_id>00000000.0000000.0000000</concept_id>
  <concept_desc>Do Not Use This Code, Generate the Correct Terms for Your Paper</concept_desc>
  <concept_significance>500</concept_significance>
 </concept>
 <concept>
  <concept_id>00000000.00000000.00000000</concept_id>
  <concept_desc>Do Not Use This Code, Generate the Correct Terms for Your Paper</concept_desc>
  <concept_significance>300</concept_significance>
 </concept>
 <concept>
  <concept_id>00000000.00000000.00000000</concept_id>
  <concept_desc>Do Not Use This Code, Generate the Correct Terms for Your Paper</concept_desc>
  <concept_significance>100</concept_significance>
 </concept>
 <concept>
  <concept_id>00000000.00000000.00000000</concept_id>
  <concept_desc>Do Not Use This Code, Generate the Correct Terms for Your Paper</concept_desc>
  <concept_significance>100</concept_significance>
 </concept>
</ccs2012>
\end{CCSXML}


\keywords{explainable machine learning, monotonicity, fairness, neural networks}



\maketitle

\pagestyle{plain}

\section{Introduction}

In recent decades, machine learning (ML) models have achieved many successes. In comparison with traditional methods, machine learning models are often capable of increasing accuracy at the expense of black-box functionality. The importance of model explanation is particularly important for highly regulated industries such as the financial sector \cite{OCC2021model}. As an example, the Consumer Financial Protection Bureau (CFPB) confirmed that anti-discrimination law requires companies to provide detailed explanations when denying an application for credit when using machine learning methods \footnote{https://www.consumerfinance.gov/about-us/newsroom/cfpb-acts-to-protect-the-public-from-black-box-credit-models-using-complex-algorithms/}. In response to the growing regulatory requirements, researchers are investigating explainable machine learning methods.

Explainable machine learning (XML) methods have been successfully used in the past to achieve great success in machine learning. Popular used methods include SHapley Additive exPlanations (SHAP) \cite{lundberg2017unified}, Local Interpretable Model-Agnostic Explanations (LIME) \cite{ribeiro2016should}, Integrated Gradients (IG) \cite{sundararajan2017axiomatic}, Anchors \cite{ribeiro2018anchors}, and Sensitivity-based methods \cite{horel2020significance}. As a result of these methods, we have gained a better understanding of how ML models function \cite{boardman2022integrated,bussmann2021explainable,horel2018}.

In this paper, we address the attribution problem, which involves allocating the prediction score of a model for a given input to its base features. The attribution to a base feature can be understood as the importance of the feature in the prediction. Credit scoring, for instance, can utilize attribution to understand how each feature contributes to the credit score. The SHAP and IG have been successfully applied to the attribution problem. Further, their use has been demonstrated to comply with a number of theoretical properties, as discussed in \cite{lundberg2017unified, sundararajan2017axiomatic, sundararajan2020many, lundstrom2022rigorous}. While extensive analyses have been conducted, the majority of results have been based on black-box machine learning models without domain knowledge. 

Modeling and ensuring conceptual soundness require domain knowledge: the true model should be consistent with the underlying theories. A number of studies have demonstrated that physics-informed machine learning (PIML) \cite{karniadakis2021physics, greydanus2019hamiltonian} improved black-box machine learning models in terms of interpretation and accuracy by enforcing conservation laws, for example. Finance and other applications often require monotonicity. A person's credit score should be decreased when there is one more past due balance on the account, for example. It is possible to achieve better generalization and interpretation when monotonicity is successfully enforced \cite{liu2020certified,milani2016fast,you2017deep,repetto2022multicriteria}. Such models can be categorized as finance-informed machine learning models (FIML) or more generally science-informed machine learning models (SIML). In addition, monotonicity is often associated with fairness. As an example, with other factors being equal, a person with more past dues should have a lower credit score. The violation of monotonicity may result in unfair consequences that could cause damage to our society. 

In this paper, we ask the following question. \textbf{Can attribution methods deliver consistent scientific explanations if SIML models contain certain scientific knowledge? If so, to what extent?} Specifically, do attribution methods preserve monotonicity for monotonic ML models?

In the past, monotonicity has been considered for XML methods. In \cite{sundararajan2020many}, it is shown that among different SHAP methods, the baseline Shapley value (BShap) method, which is a generalization of Shapley-Shubik \cite{friedman1999three}, preserves demand individual monotonicity. However, as recently highlighted in \cite{chen2022monotonic,chen2023address,gupta2020multidimensional,cotter2019shape}, individual monotonicity is not the only problem to be addressed; pairwise monotonicity is just as important. The concept of pairwise monotonicity refers to the comparison of features within a given pair. For instance, a past due of more than 60 days should be considered more serious than one of less than 60 days. Pairwise monotonicity is a requirement of fairness that is informed by domain knowledge. It is unfortunate that pairwise monotonicity has been neglected in the existing literature. 

This paper extends monotonicity results to a broader range of cases. As a summary, we have made the following contributions:
\begin{enumerate}
    \item We propose three new axioms concerning the average preservation of three types of monotonicity. Accordingly, a good attribution method should satisfy these axioms if the underlying model is monotonic. 
    \item As we show, IG preserves all average monotonicity axioms, but failing to preserve demand individual monotonicity. 
    \item While BShap preserves individual monotonicity and weak pairwise monotonicity, it fails to preserve average strong pairwise monotonicity. 
\end{enumerate}

A number of examples are used to illustrate our results. BShap is appropriate when only individual monotonicity is considered and is better suited if demand individual monotonicity is required. On the other hand, when strong pairwise monotonicity is involved, IG provides reasonable explanations on average.

The remainder of the paper is organized as follows. Section 2 introduces attributions, Integrated gradients, (baseline) Shapley value, axioms, and types of monotonicity. In Section 3, we propose three monotonic-related axioms and analyze IG and BShap for these axioms. In Section 4, we present an empirical example. Section 5 concludes.

\section{Preliminaries}
\label{prerequisites}
For problem setup, assume we have $\mathcal{D} \times \mathcal{Y}$, where $\mathcal{D}$ is the dataset with $n$ samples and $m$ features and $\mathcal{Y}$ is the corresponding numerical values in regression and labels in classification. For $\*a, \*b \in \mathbb{R}^m$, define $[\*a,\*b]$ to be the hyperrectangle. We denote a class of functions $f: [\*a,\*b] \rightarrow \mathbb{R}$ by $\mathcal{F}(\*a,\*b)$, or simply $\mathcal{F}$.  For simplicity, we assume $\*x \in [\*a, \*b]^m$ and $f$ is differentiable almost everywhere.



\subsection{Attribution}
Following \cite{lundstrom2022rigorous}, we call the point of interest $\*x$ to explain as an explicand and $\*x'$ a baseline. The Baseline Attribution Method that interprets features' importance is defined as follows.

\begin{definition}[Baseline Attribution Method (BAM)]
    Given $\*x, \*x' \in [\*a,\*b]$, $f \in \mathcal{F}(\*a,\*b)$, a baseline attribution method is any function of the form $\*A: [\*a,\*b] \times [\*a,\*b] \times \mathcal{F}(\*a,\*b) \rightarrow \mathbb{R}^m$ with $\*A(\*x',\*x',f) = \*0$. We may also write $\*A$ and denote $A_i$ as the $i$th attribution of $\*A$ for simplicity. 
\end{definition}

\subsubsection{Integrated Gradients}

Integrated Gradients \cite{sundararajan2017axiomatic} is a simple yet useful tool to measure features' importance. 

\begin{definition}[Integrated Gradients (IG)]
    Given $\*x, \*x' \in [a,b]$ and $f \in \mathcal{F}(\*a,\*b)$, the integrated gradients attribution of the $i$-th component of $\*x$ is defined as 
    \begin{align}
        \text{IG}_i(\*x,\*x',f) = (x_i-x_i') \int_0^1 \frac{\partial f}{\partial x_i} \left( \*x' + t(\*x-\*x') \right) \ dt.
    \end{align}
\end{definition}

In practice, this is usually calculated through numerical approximation
\begin{align}
    \text{IG}_i(\*x,\*x',f) \approx (x_i-x_i') \frac{1}{n} \sum_{k=1}^n \frac{\partial f(\*x' + \frac{k}{n} (\*x-\*x'))}{\partial x_i},
\end{align}
with 20 - 300 calls of gradient. 
For simplicity, we often use $\text{IG}_i$ for $\text{IG}_i(\*x,\*x',f)$.

\subsubsection{(Baseline) Shapley Value (BShap)}

The Shapley value \cite{lundberg2017unified} takes as input a set function $v:2^M \rightarrow \mathbb{R}$. The Shapley value produces attributions $s_i$ for each player $i \in M$ that add up to $v(M)$. The Shapley value of a player $i$ is given by:
\begin{align}
    s_i = \sum_{S \subseteq M \backslash i} \frac{|S|! (|M|-|S|-1)!}{M!} (v(S \cup i) - v(S)).
\end{align}
Here, we focus on the Baseline Shapley (Shap), which calculates
\begin{align}
    v(S) = f(\*x_S; \*x'_{M \backslash S}).
\end{align}
That is, baseline values replace the feature's absence. We denote Bshap attribution by $\text{BS}_i(\*x,\*x',f)$ and $\text{BS}_i$ sometimes. 

\subsection{Axioms}

Many desirable characteristics of an attribution technique have been identified in the literature, such as sensitivity, implementation variance, and completeness. Interested readers are referred to \cite{lundstrom2022rigorous,sundararajan2020many,sundararajan2017axiomatic} for detailed discussion. This paper focuses on the axiom of monotonicity. \cite{sundararajan2020many} introduced demand individual monotonicity as one of the desired axioms. The attributions for a feature increase with increasing feature values if the model is monotonic with respect to that feature, all else being fixed, including the baseline value. The paper \cite{sundararajan2020many} showed that BShap is a powerful tool that preserves the monotonicity of demand individual demands and other desirable characteristics. 

\subsection{Individual and pairwise monotonicity}

Three types of monotonicity are discussed here. Without loss of generality, we assume that all monotonic features are monotonically increasing throughout the paper. Suppose $\~{\alpha}$ is the set of all individual monotonic features and  $\neg \~{\alpha}$ its complement, then the input $\*x$ can be partitioned into $\*x = (\*x_{\~{\alpha}}, \*x_{\neg \~{\alpha}})$. Individual monotonicity is defined as follows. 
\begin{definition}[Individual Monotonicity] \label{def:indi_mono}
We say $f$ is individually monotonic with respect to $\*x_{\~{\alpha}}$ if
\begin{align} \label{eq:mono_con1}
 & f(\*x) = f(\*x_{\~{\alpha}}, \*x_{\neg \~{\alpha}}) \leq f(\*x^*_{\~{\alpha}}, 
 \*x_{\neg \~{\alpha}}) = f(\*x^*), \nonumber \\
 & \forall \*x,\*x^* \in [\*a,\*b] \text{ s.t. } \*x_{\~{\alpha}} \leq \*x^*_{\~{\alpha}},
\end{align}
where $\*x_{\~{\alpha}} \leq \*x_{\~{\alpha}}'$ means $x_{\alpha_i} \leq x_{\alpha_i}', \forall{i}$.
\end{definition}

\begin{example}\label{eg:IM}
    In the case of credit scoring, given that $f$ represents the probability of default and $x_{\alpha}$ represents the number of past dues, then $f$ should increase monotonically with respect to $x_{\alpha}$. 
\end{example}

In practice, certain features are intrinsically more important than others. Analog to \eqref{eq:mono_con1}, we partition $\*x = (x_{\beta},x_{\gamma},\*x_{\neg})$. Without sacrificing generality, we assume that $x_{\beta}$ has greater significance than $x_{\gamma}$. Lastly, we require that all features exhibiting pairwise monotonicity also exhibit individual monotonicity. Pairwise monotonicity can be categorized into two types: strong and weak. As a more general definition, weak pairwise monotonicity is presented below. 

\begin{definition}[Weak Pairwise Monotonicity] \label{def:weak_mono}
We say $f$ is weakly monotonic with respect to $x_{\beta}$ over $x_{\gamma}$ if 
\begin{align}\label{eq:mono_con2}
    & f(\*x) = f(x_{\beta},x_{\gamma}+c,\*x_{\neg}) \leq f(x_{\beta}+c,x_{\gamma},\*x_{\neg}) = f(\*x^*), \nonumber \\
    & \forall \*x, \*x^* \in [\*a,\*b] \text{ s.t. } x_{\beta}=x_{\gamma}, c>0. 
\end{align}
\end{definition}
Weak pairwise monotonicity compares the significance of $x_{\beta}$ and $x_{\gamma}$ at the same magnitude. The following is an example. 

\begin{example}\label{eg:WPM}
    An applicant who intends to study STEM in graduate school is required to take the GRE general (170 in math and 170 in verbal) as one of the factors for admission. Assume $f$ is the admitted probability, $x_{\beta}$ is the student's math score, and $x_{\gamma}$ is the student's verbal score. Due to the importance of math in STEM, $f$ should be weakly monotonic with respect to $x_{\beta}$ over $x_{\gamma}$. In this case, it is necessary to fulfill the condition $x_{\beta} = x_{\gamma}$. If the student has the same level of math and verbal skills, it is more desirable to see improvement in math. It is a different story when a student has a strong math score but a weak verbal score. There is often a requirement for a minimum verbal score in schools in order to ensure effective communication. A student who has a strong math score but a very weak verbal score may be able to improve his/her chances of admission more significantly if he/she improves his/her verbal score. 
\end{example}

In addition, there is a stronger condition of pairwise monotonicity, known as strong pairwise monotonicity, which is independent of the condition that $x_{\beta} = x_{\gamma}$. Here is the definition. 

\begin{definition}[Strong Pairwise Monotonicity] \label{def:strong_mono}
We say $f$ is strongly monotonic with respect to $x_{\beta}$ over $x_{\gamma}$ if 
\begin{align}\label{eq:mono_con3}
    & f(\*x) = f(x_{\beta},x_{\gamma}+c,\*x_{\neg}) \leq f(x_{\beta}+c,x_{\gamma},\*x_{\neg}) = f(\*x^*), \nonumber \\
    & \forall \*x, \*x^* \in [\*a,\*b] \text{ s.t. } c>0.
\end{align}
\end{definition}

Accordingly, $x_{\beta}$ is always more important than $x_{\gamma}$ regardless of its value. Below is an example. 

\begin{example}\label{eg:SPM}
 In credit scoring, suppose $f$ calculates the probability of default, $x_{\beta}$ counts the number of past dues that have been outstanding for more than two months, and $x_{\gamma}$ counts the number of past dues that are outstanding between one and two months. As $x_{\beta}$ is always more important than $x_{\gamma}$, $f$ should be strongly monotonic with respect to $x_{\beta}$ over $x_{\gamma}$. The pairwise relationship is strong here since whenever there is an additional past due, a longer past due is always more serious.         
\end{example}

\section{Monotonic Preserving}
In related applications, monotonicity is an important component in domain knowledge. This section discusses axioms derived from different types of monotonicity and analyzes IG and BShap behaviors.

\subsection{Demand Individual Monotonicity} 

 Consider two explicands $\*x = (x_1, \dots, x_m)$ and $\*x^* = (x_1, \dots, x_{\alpha}+c, \dots, x_m)$, where $c >0$. We restate the axiom of demand individual monotonicity proposed in \cite{friedman1999three}.

\begin{definition}[\textbf{Demand Individual Monotonicity (DIM)}]\label{def:DIM}
    Suppose $f$ is individually monotonic with respect to $x_{\alpha}$. We say a BAM preserves demand individual monotonicity if 
    \begin{align}
        A_{\alpha}(\*x^*,\*x',f) \geq A_{\alpha}(\*x,\*x',f), \forall \*x, \*x^* \in [\*a,\*b].
    \end{align}
\end{definition}

\begin{example}\label{eg:DIM}
    Consider Example~\ref{eg:IM}, whenever there is an additional past due, the attribution of $x_{\alpha}$ should be increased since $x_{\alpha}$ caused the change. 
\end{example}

As discussed in \cite{sundararajan2020many}, BShap preserves DIM. It is presented here without the contents of the cost-sharing problem. 
\begin{theorem}
    BShap preserves DIM. 
\end{theorem}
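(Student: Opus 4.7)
The plan is to reduce the claim to a termwise comparison of the BShap sum. Writing
$\text{BS}_\alpha(\*x,\*x',f) = \sum_{S \subseteq M \setminus \{\alpha\}} \tfrac{|S|!(|M|-|S|-1)!}{|M|!}\bigl(v(S \cup \{\alpha\}) - v(S)\bigr)$
with $v(T) = f(\*x_T; \*x'_{M \setminus T})$ and with strictly positive weights, it suffices to show that every marginal contribution is nondecreasing when $\*x$ is replaced by $\*x^*$ (which differs from $\*x$ only by $+c$ in coordinate $\alpha$, with $c>0$) while the baseline $\*x'$ is held fixed. Denote by $v^*$ the game induced by $\*x^*$.

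Next, I would unpack what each marginal contribution computes. For any $S \subseteq M \setminus \{\alpha\}$, the two hybrid points $(\*x_{S \cup \{\alpha\}}; \*x'_{M \setminus (S \cup \{\alpha\})})$ and $(\*x_S; \*x'_{M \setminus S})$ agree on every coordinate except in the $\alpha$-slot, where the first uses $x_\alpha$ and the second uses $x'_\alpha$. Switching from $\*x$ to $\*x^*$ leaves $v^*(S) = v(S)$ (the $\alpha$-slot is $x'_\alpha$ either way, and all coordinates in $S$ of $\*x$ and $\*x^*$ coincide), while $v^*(S \cup \{\alpha\})$ differs from $v(S \cup \{\alpha\})$ only by raising the $\alpha$-coordinate from $x_\alpha$ to $x_\alpha + c$. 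Individual monotonicity (Definition~\ref{def:indi_mono}), applied to this pair of points in $[\*a,\*b]$, then gives $v^*(S \cup \{\alpha\}) \geq v(S \cup \{\alpha\})$, so $v^*(S \cup \{\alpha\}) - v^*(S) \geq v(S \cup \{\alpha\}) - v(S)$ for every $S$. Multiplying by the positive Shapley weights and summing over $S$ yields $\text{BS}_\alpha(\*x^*, \*x', f) \geq \text{BS}_\alpha(\*x, \*x', f)$, as required.

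The main obstacle is purely bookkeeping: one must verify that only the $\alpha$-coordinate ever moves between the $\*x$ and $\*x^*$ versions of each hybrid point, and that the hybrid points remain in $[\*a,\*b]$ so Definition~\ref{def:indi_mono} is applicable. There is no subtle interaction between coalitions, because each term of the Shapley sum is independently monotone in $x_\alpha$; once the termwise inequality is in hand, positivity of the weights closes the argument in one line.
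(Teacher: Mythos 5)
Your proposal is correct and follows essentially the same route as the paper: both reduce the claim to a termwise comparison of the Shapley sum, observing that $v^*(S)=v(S)$ while $v^*(S\cup\{\alpha\})\geq v(S\cup\{\alpha\})$ by individual monotonicity applied to the two hybrid points, and then conclude via positivity of the weights. Your write-up is if anything slightly more careful about the bookkeeping (explicitly noting that the hybrid points stay in $[\*a,\*b]$), but there is no substantive difference in the argument.
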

\begin{proof}
    Without loss of generality, suppose $f$ is individually monotonic with respect to $x_{1}$. Suppose $\*x = (x_1, \*x_{\neg})$ and $\*x^* = (x_1+c, \*x_{\neg})$ for $c>0$. Then
    \begin{align*}
        & \text{BS}_1(\*x^*,\*x',f) - \text{BS}_1(\*x,\*x',f) \\
        &= \sum_{S \subseteq M \backslash 1} \frac{|S|! (|M|-|S|-1)!}{M!} \Delta f_S,
    \end{align*}
    where
    \begin{align*}
        \Delta f_S = f(x_1+c; \*x_S, \*x_{M \backslash (S \cup 1)}') - f(x_1;\*x_S,\*x_{M \backslash (S \cup 1)}') \geq 0
    \end{align*}
    because of individual monotonicity. Thus, we conclude. 
\end{proof}

IG has unfortunately produced a negative result.

\begin{remark}\label{remark:IG_DIM_fail}
    IG doesn't preserve DIM.
\end{remark}

The following example in \cite{friedman1999three} provides a counterexample with a comparison between IG and BShap. 

\begin{example}\label{eg:DIM_calc}
    Consider the function $f(x_1,x_2) = \frac{x_1 x_2}{x_1+x_2}$ with baseline $\*x' = (0,0)$. For IG, it can be shown that $\text{IG}_1(\*x,\*x',f) = \frac{x_1x_2^2}{(x_1+x_2)^2}$. Note $\frac{\partial}{\partial x_1} \text{IG}_1 = \frac{x_2^2(x_2-x_1)}{(x_1+x_2)^3}$, therefore DIM is not preserved. It is important to note that the path for the new explicand has been changed. As a result, the new path cannot guarantee greater attributions for the explicand. 

    For BShap, we have $\text{BS}_1(\*x,\*x',f) = \frac{x_1 x_2}{2(x_1+x_2)}$, which preserves DIM. The main difference is that in the BShap calculation, the new path covers the old path.

    

\end{example}

\subsection{Average Individual Monotonicity}

DIM concerns individual monotonicity for each change. The average result is considered here as a weaker case. The following axiom is motivated by this fact. 

\begin{definition}[\textbf{Average Individual Monotonicity (AIM)}]
    Suppose $f$ is individually monotonic with respect to $x_{\alpha}$, then we say a BAM preserves average individual monotonicity if 
    \begin{align}
        A_{\alpha}(\*x,\*x',f) \geq 0, \forall \*x \in [\*a,\*b] \text{ s.t. } \*x \geq \*x'.
    \end{align}
\end{definition}

\begin{example}
    Consider Example~\ref{eg:IM}, $x_{\alpha}$ should always provide positive contributions. 
\end{example}

Clearly, DIM is a stronger condition, as stated in the following. 

\begin{theorem}
    If a BAM preserves DIM, then it preserves AIM. 
\end{theorem}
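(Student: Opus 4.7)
The plan is to derive AIM as a one-step corollary of DIM, anchored at the baseline via the BAM boundary condition $\*A(\*x',\*x',f)=\*0$. Concretely, AIM asks for $A_\alpha(\*x,\*x',f)\geq 0$ whenever $\*x\geq\*x'$, while DIM asserts that $A_\alpha$ is non-decreasing in the explicand's $\alpha$-coordinate; combining the two facts starting from the baseline, where $A_\alpha=0$, yields exactly AIM.

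More precisely, given $\*x\in[\*a,\*b]$ with $\*x\geq\*x'$ (so in particular $x_\alpha\geq x'_\alpha$), I would invoke DIM along the $\alpha$-direction to compare the attribution at $\*x$ with that at the baseline $\*x'$, obtaining $A_\alpha(\*x,\*x',f)\geq A_\alpha(\*x',\*x',f)=0$. If DIM is read strictly, so that the two compared explicands must agree off coordinate $\alpha$, I would insert the intermediate point $\tilde{\*x}$ that equals $\*x$ in every coordinate except $\alpha$, where $\tilde{x}_\alpha=x'_\alpha$: DIM on the legitimate pair $(\tilde{\*x},\*x)$ gives $A_\alpha(\*x,\*x',f)\geq A_\alpha(\tilde{\*x},\*x',f)$, and a second application (or its $c\to 0^+$ limit, since $\tilde{x}_\alpha=x'_\alpha$) reduces the right-hand side to $A_\alpha(\*x',\*x',f)=0$.

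The main obstacle is just this bookkeeping around the zero-increment case, and it evaporates once DIM is read as weak monotonicity of $x_\alpha\mapsto A_\alpha(\cdot,\*x',f)$ with other coordinates and baseline held fixed. Substantively, the theorem is the trivial observation that DIM's monotonicity, anchored at the baseline, forces non-negativity everywhere above the baseline, so no calculation is required beyond quoting the two axioms in sequence.
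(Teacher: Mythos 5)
Your proof is correct and follows essentially the same route as the paper: set the comparison explicand in DIM equal to the baseline and invoke the BAM normalization $\*A(\*x',\*x',f)=\*0$. Your extra bookkeeping for explicands that differ from $\*x'$ in coordinates other than $\alpha$ is more careful than the paper's one-line argument, though your reduction of $A_\alpha(\tilde{\*x},\*x',f)$ to zero implicitly uses a dummy-type property (the attribution of $x_\alpha$ vanishes whenever $x_\alpha=x'_\alpha$) rather than a literal application of DIM --- the same tacit assumption underlying the paper's own proof.
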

\begin{proof}
    Suppose we want to explain $\*x^*$, let $\*x = \*x'$ in Definition~\ref{def:DIM}, then it follows from the definition. 
\end{proof}

As a result, we have the following corollary. 

\begin{corollary}
    BShap preserves AIM. 
\end{corollary}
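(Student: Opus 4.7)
The plan is straightforward and requires no new work: this corollary is an immediate consequence of chaining together the two results that appear just before it. Specifically, the earlier theorem establishes that BShap preserves DIM, and the theorem immediately preceding this corollary establishes that any BAM preserving DIM also preserves AIM. So I would simply invoke both results in sequence.

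In slightly more detail, I would note that for any individually monotonic feature $x_{\alpha}$ and any explicand $\*x$ with $\*x \geq \*x'$, applying the DIM-to-AIM implication to the BShap attribution $\text{BS}_{\alpha}$ — which is guaranteed to satisfy DIM by the first theorem — immediately gives $\text{BS}_{\alpha}(\*x,\*x',f) \geq 0$, which is exactly the AIM condition. The proof of the preceding theorem already handles the key step of specializing $\*x = \*x'$ in the DIM inequality to extract the AIM inequality, so no new calculation is needed here.

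There is no real obstacle: the work was done in the two preceding results, and this corollary is essentially a bookkeeping statement recording their composition. The only thing worth being careful about is to explicitly cite both the BShap-preserves-DIM theorem and the DIM-implies-AIM theorem so that the logical chain is transparent to the reader. A one-line proof along the lines of ``Combining the previous two results gives the claim'' should suffice.
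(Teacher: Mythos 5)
Your proposal is correct and matches the paper's intent exactly: the corollary is stated without proof as an immediate consequence of the theorem that BShap preserves DIM and the theorem that DIM implies AIM. Nothing further is needed.
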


In spite of the fact that the IG fails to preserve DIM, it preserves AIM. 

\begin{theorem}
    IG preserves AIM. 
\end{theorem}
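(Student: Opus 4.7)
The plan is to unpack the definition of IG and use the fact that individual monotonicity of $f$ with respect to $x_\alpha$ is equivalent (since $f$ is differentiable almost everywhere) to the pointwise condition $\partial f / \partial x_\alpha \geq 0$ almost everywhere on $[\*a,\*b]$. AIM is then a factorization into a sign of a scalar times a sign of an integral, so there is no hidden obstruction.

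First I would fix $\alpha$ and suppose $f$ is individually monotonic in $x_\alpha$, and let $\*x \geq \*x'$. I would observe that the straight-line path $\*x' + t(\*x - \*x')$ for $t \in [0,1]$ lies in the hyperrectangle $[\*a, \*b]$ because it is a convex combination of two points in $[\*a,\*b]$. Therefore $\partial f / \partial x_\alpha (\*x' + t(\*x-\*x')) \geq 0$ for almost every $t \in [0,1]$, which gives
\begin{equation*}
\int_0^1 \frac{\partial f}{\partial x_\alpha}\bigl(\*x' + t(\*x-\*x')\bigr)\, dt \geq 0.
\end{equation*}
Combined with $x_\alpha - x'_\alpha \geq 0$ from $\*x \geq \*x'$, the product defining $\text{IG}_\alpha(\*x,\*x',f)$ is non-negative, establishing AIM.

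I do not anticipate a main obstacle here; the only thing to be a little careful about is the almost-everywhere differentiability of $f$ (as assumed in the preliminaries), which means individual monotonicity implies non-negativity of $\partial f / \partial x_\alpha$ in an almost-everywhere sense rather than pointwise, but this is exactly what is needed for the integral. It is worth remarking, for contrast with DIM, that the argument works because the path is fixed once $\*x$ and $\*x'$ are fixed; the failure mode highlighted in Example~\ref{eg:DIM_calc} (changing $\*x$ shifts the path and can reduce the integrand along the new path) does not affect AIM, because AIM only asks for non-negativity of a single integral, not monotonicity across different explicands.
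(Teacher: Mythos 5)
Your proof is correct and follows essentially the same route as the paper's: monotonicity gives $\partial f/\partial x_\alpha \geq 0$ along the (convex, hence in-domain) path, so the integral is non-negative, and $x_\alpha - x'_\alpha \geq 0$ makes the product non-negative. Your extra care about the almost-everywhere sense of the derivative and the closing remark contrasting with DIM are sensible but not a different argument.
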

\begin{proof}
    Suppose $f$ is individually monotonic with respect to $x_{\alpha}$, then 
    \begin{align*}
            \text{IG}_{\alpha} &= (x_{\alpha} - x_{\alpha}') \int_0^1 \frac{\partial f}{\partial x_{\alpha}} \left( \*x' + t(\*x-\*x') \right) \ dt \geq 0,
    \end{align*}
    since $x_{\alpha} \geq x_{\alpha}'$ and $\frac{\partial f}{\partial x_{\alpha}} \geq 0$.
\end{proof}

As shown in Example~\ref{eg:DIM_calc}, if we consider a new path, IG will still contribute positively to the feature, but the contribution of the features may decrease. To this end, we ask: does the failure of DIM pose a significant concern? DIM is derived from a somewhat implicit baseline of $\*x$. It is important to recall that the increase is relative to $\*x$. However, the attribution we seek to calculate is essentially an average calculation. Consider the following example. 

\begin{example}{\label{eg:DIM_import}}
     Consider Example~\ref{eg:SPM} with a concrete example, where $f(x_1,x_2) = \min(0.2x_1+0.1x_2,0.3)$ and $x_1$ is strongly monotonic with respect to $x_2$. At $\*x = (1,1)$ with baseline $\*x' = (0,0)$, the corresponding attribution should be $\*A(\*x,\*x',f) = (0.2,0.1)$, due to linearity. Now we consider $\*x^* = (3,1)$, $f(\*x^*) = 0.3$ and DIM requires that $A_1(\*x^*,\*x',f)\geq 0.2$. In this case, we wish to satisfy this requirement. However, if we consider $\*x^{!} = (0,4)$ with $f(x^{!}) = 0.3$. From the perspective of $x^!$, $x_1$ does not incur an additional cost. Accordingly, a unit of $x_1$ and $x_2$ should be attributed equally. This results in attributions at $(2,2)$ of $(0.15,0.15)$, which violates the principle of DIM. 
\end{example}

\begin{remark}
    In light of Example~\ref{eg:DIM_import}, we argue that DIM may not be necessary in some cases. Although DIM makes sense for some features, it can be achieved through the use of a historical baseline, as in \cite{alam2022applications}. Taking into consideration the time at which there is an increase in monotonic features of $\*x_t$ and we have $\*x_{t+1}$, then AIM provides positive attributions for $A(\*x_{t+1},\*x_t,f)$, which demand individual monotonicity. 
\end{remark}

\subsection{Average Weak Pairwise Monotonicity}

Weak pairwise monotonicity compares the importance of features of the same magnitude. Therefore, if two features have the same magnitude, we would expect the more important feature to have more attributions. In light of this, we propose the following axiom. 

\begin{definition}[\textbf{Average Weak Pairwise Monotonicity (AWPM)}]
    Suppose $f$ is weakly monotonic with respect to $x_{\beta}$ over $x_{\gamma}$, $x_{\beta}>x_{\beta}'$ and $x_{\gamma} > x_{\gamma}'$. Suppose for an explicand $\*x \in [\*a,\*b]$, we have $x_{\beta} = x_{\gamma}$. Then we say a BAM preserves average weak pairwise monotonicity if 
    \begin{align}
         \frac{1}{x_{\beta} - x_{\beta}'} A_{\beta}(\*x,\*x',f) \geq  \frac{1}{x_{\gamma} - x_{\gamma}'} A_{\gamma}(\*x,\*x',f).
    \end{align}
\end{definition}

\begin{example}
    Consider Example~\ref{eg:WPM}, math scores should contribute more attributions when a student has the same verbal and math scores. 
\end{example}

IG and BShap both preserve AWPM with appropriate baselines. For simplicity, we only show results for $\*x'=\*0$ here. 

\begin{theorem}
    For $\*x' = \*0$, IG preserves AWPM. 
\end{theorem}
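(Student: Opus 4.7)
The plan is to reduce AWPM to a pointwise comparison of partial derivatives along the straight-line path from the origin to $\*x$, and then to deduce that pointwise comparison from the finite-difference statement of weak pairwise monotonicity.

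First, I would plug $\*x' = \*0$ into the IG formula to obtain
\begin{align*}
\text{IG}_\beta(\*x,\*0,f) &= x_\beta \int_0^1 \frac{\partial f}{\partial x_\beta}(t\*x)\,dt, \\
\text{IG}_\gamma(\*x,\*0,f) &= x_\gamma \int_0^1 \frac{\partial f}{\partial x_\gamma}(t\*x)\,dt.
\end{align*}
Since the explicand satisfies $x_\beta = x_\gamma$, dividing by $x_\beta$ and $x_\gamma$ respectively is the same operation, so AWPM is equivalent to the pointwise inequality $\int_0^1 \frac{\partial f}{\partial x_\beta}(t\*x)\,dt \ge \int_0^1 \frac{\partial f}{\partial x_\gamma}(t\*x)\,dt$. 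The crucial observation is that along the path $t \mapsto t\*x$ the $\beta$- and $\gamma$-coordinates remain equal to each other for every $t \in [0,1]$, so every point on the path lies on the diagonal $\{y_\beta = y_\gamma\}$ where weak pairwise monotonicity has teeth.

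Next I would extract a pointwise derivative statement from Definition~\ref{def:weak_mono}. At any $\*y \in [\*a,\*b]$ with $y_\beta = y_\gamma$ and any $c>0$ small enough, the inequality
\[
 f(y_\beta, y_\gamma + c, \*y_{\neg}) \le f(y_\beta + c, y_\gamma, \*y_{\neg})
\]
rearranges to $\tfrac{1}{c}\bigl[f(y_\beta+c,y_\gamma,\*y_{\neg}) - f(\*y)\bigr] \ge \tfrac{1}{c}\bigl[f(y_\beta,y_\gamma+c,\*y_{\neg}) - f(\*y)\bigr]$. Letting $c \to 0^+$ and using differentiability of $f$ yields $\frac{\partial f}{\partial x_\beta}(\*y) \ge \frac{\partial f}{\partial x_\gamma}(\*y)$ at every such $\*y$. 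Applying this with $\*y = t\*x$ for each $t\in[0,1]$ and integrating in $t$ gives the required comparison of integrals, which multiplied by the common value $x_\beta = x_\gamma$ delivers AWPM.

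The main subtlety, rather than difficulty, is the passage from the finite-difference inequality to the partial-derivative inequality: one needs to justify the $c \to 0^+$ limit inside the pointwise statement, which is handled cleanly by the blanket assumption that $f$ is differentiable almost everywhere together with choosing $\*y$ on the path $t\*x$ where both partials exist. A small bookkeeping point is that the result only uses weak monotonicity on the diagonal, which is exactly where the straight-line path from $\*0$ to $\*x$ lives when $x_\beta = x_\gamma$; this is precisely why the theorem is stated for the baseline $\*x' = \*0$ and would not go through for a general baseline that pushes the path off the diagonal.
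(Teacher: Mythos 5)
Your proposal is correct and follows essentially the same route as the paper: restrict to the diagonal path $t\mapsto t\*x$ (which stays in $\{y_\beta=y_\gamma\}$ because $x_\beta=x_\gamma$ and the baseline is $\*0$), use weak pairwise monotonicity to get $\frac{\partial f}{\partial x_\beta}\ge\frac{\partial f}{\partial x_\gamma}$ on that diagonal, and integrate. The only difference is that you explicitly justify the pointwise derivative inequality via the $c\to 0^+$ limit of the finite-difference definition, a step the paper asserts without proof; this is a welcome bit of extra rigor, not a deviation.
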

\begin{proof}
Suppose $f$ is weakly monotonic with respect to $x_{\beta}$ over $x_{\gamma}$. For $\*x'=\*0$,  we have $\frac{1}{x_{\beta}-x_{\beta}'} = \frac{1}{x_{\gamma}-x_{\gamma}'}$. Since $x_{\beta}=x_{\gamma}$ in $\*x$ and $\*x'$, $x_{\beta}=x_{\gamma}$ for $\*x'+t(\*x-\*x')$, $\forall t \in [0,1]$. $f$ is weakly monotonic respect to $x_{\beta}$ over $x_{\gamma}$, therefore $\frac{\partial f}{\partial x_{\beta}}(\*x) \geq \frac{\partial f}{\partial x_{\gamma}}(\*x)$ if $x_{\beta}=x_{\gamma}$ in $\*x$. Hence, 
\begin{align*}
    \text{IG}_{\beta} &= (x_{\beta} - x_{\beta}') \int_0^1 \frac{\partial f}{\partial x_{\beta}} \left( \*x' + t(\*x-\*x') \right) \ dt \\
    & \geq (x_{\beta} - x_{\beta}') \int_0^1 \frac{\partial f}{\partial x_{\gamma}} \left( \*x' + t(\*x-\*x') \right) \ dt \\
    &=  (x_{\gamma} - x_{\gamma}') \int_0^1 \frac{\partial f}{\partial x_{\gamma}} \left( \*x' + t(\*x-\*x') \right) \ dt  \\
    &= \text{IG}_{\gamma}.
\end{align*}
\end{proof}

\begin{theorem}
    For $\*x' = \*0$,  BShap preserves AWPM. 
\end{theorem}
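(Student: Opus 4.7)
The plan is to mimic the structure of the preceding theorem on IG but work at the level of the Shapley sum. Since $\*x'=\*0$, the normalizing factors agree: $1/(x_\beta-x_\beta') = 1/x_\beta = 1/x_\gamma = 1/(x_\gamma - x_\gamma')$, so it suffices to show $\text{BS}_\beta(\*x,\*0,f) \geq \text{BS}_\gamma(\*x,\*0,f)$.

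The approach is to pair up, term by term, the subsets appearing in the two Shapley sums. Writing $N = M\setminus\{\beta,\gamma\}$, every $S\subseteq M\setminus\beta$ is either of the form $T$ or $T\cup\{\gamma\}$ with $T\subseteq N$, and symmetrically for $S\subseteq M\setminus\gamma$. I would split each sum into two groups accordingly. The combinatorial weights $|S|!(m-|S|-1)!/m!$ depend only on $|S|$, so the subset $T$ in $\text{BS}_\beta$ is paired with the same $T$ in $\text{BS}_\gamma$ (both of size $|T|$), and $T\cup\{\gamma\}$ in $\text{BS}_\beta$ is paired with $T\cup\{\beta\}$ in $\text{BS}_\gamma$ (both of size $|T|+1$); the weights match in both cases.

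After matching weights, the claim reduces to two pointwise inequalities. For the group with $T\subseteq N$, one needs $f(x_\beta,0;\*x_T,\*0) - f(0,0;\*x_T,\*0) \geq f(0,x_\gamma;\*x_T,\*0) - f(0,0;\*x_T,\*0)$; for the group with $T\cup\{\gamma\}$ versus $T\cup\{\beta\}$, one needs $f(x_\beta,x_\gamma;\*x_T,\*0) - f(0,x_\gamma;\*x_T,\*0) \geq f(x_\beta,x_\gamma;\*x_T,\*0) - f(x_\beta,0;\*x_T,\*0)$. Both collapse to the single inequality $f(x_\beta,0;\*x_T,\*0) \geq f(0,x_\gamma;\*x_T,\*0)$, which is exactly the weak pairwise monotonicity condition \eqref{eq:mono_con2} applied at the point where $x_\beta = x_\gamma = 0$, with $c = x_\beta = x_\gamma$ and the remaining coordinates set to $(\*x_T,\*0)$. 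Summing the non-negative differences with their (identical, non-negative) weights yields $\text{BS}_\beta \geq \text{BS}_\gamma$, and hence AWPM.

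I expect the main obstacle to be purely notational: carefully bookkeeping the split of Shapley subsets into the four classes (by whether they contain $\beta$ and whether they contain $\gamma$) and verifying that the paired subsets have equal cardinality, so the binomial-like weights cancel cleanly. Once that is in place, the use of weak pairwise monotonicity is a one-line application, crucially enabled by the choice $\*x'=\*0$ which ensures the baseline coordinates of $\beta$ and $\gamma$ are equal so that the hypothesis $x_\beta = x_\gamma$ propagates to the pivot point of the comparison.
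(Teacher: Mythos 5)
Your proposal is correct and matches the paper's own argument: the paper likewise pairs each $S\subseteq M\setminus\beta$ with the subset obtained by swapping $\gamma$ for $\beta$ (so $S'=S$ when $\gamma\notin S$ and $S'=(S\setminus\gamma)\cup\beta$ otherwise), notes the cardinalities and hence the Shapley weights agree, and reduces both cases to the single inequality $f(x_\beta,0;\cdot)\geq f(0,x_\gamma;\cdot)$ obtained from weak pairwise monotonicity at the zero pivot with $c=x_\beta=x_\gamma$. No gaps.
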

\begin{proof}
    Without loss of generality, suppose $f$ is weakly monotonic with respect to $x_1$ over $x_2$. For $\*x'=\*0$,  we have $\frac{1}{x_1-x_1'} = \frac{1}{x_2-x_2'}$. Recall that 
    \begin{align*}
        \text{BS}_1 = \sum_{S \subseteq M \backslash 1} \frac{|S|! (|M|-|S|-1)!}{M!} (v(S \cup 1) - v(S)).
    \end{align*}
    For $\text{BS}_2$, we use the symmetry, for each $S$ here, we consider $S'$ such that $x_1$ and $x_2$ are swapped within $S$, and everything else is left unchanged. That is, if $2 \notin S$, $S'=S$; if $2 \in S$, then $S' = (S \backslash 2) \cup 1$. For this arrangement, we only need to show that $v(S \cup 1) - v(S) \geq v(S' \cup 2) - v(S')$ in the summation. If $2 \notin S$, we have
    \begin{align*}
         v(S \cup 1) - v(S) &= f(x_1,0; \*x_{S}, \*x_{M \backslash (S \cup \{1,2\})}’) \\ 
         &- f(0,0; \*x_{S}, \*x_{M \backslash (S \cup \{1,2\}}’), \\
        v(S' \cup 2) - v(S') &= 
        f(0,x_2; \*x_{S'}, \*x_{M \backslash (S' \cup \{1,2\})}’) \\
        &- f(0,0; \*x_{S'}, \*x_{M \backslash (S' \cup \{1,2\})}’).
    \end{align*}
    Since $S=S'$ and $x_1=x_2$, by the definition of weak pairwise monotonicity, we have 
    \begin{align*}
        f(x_1,0; \*x_{S}, \*x_{M \backslash (S \cup \{1,2\})}’) \geq f(0,x_2; \*x_{S'}, \*x_{M \backslash (S' \cup \{1,2\})}’),
    \end{align*}
    and therefore $v(S \cup 1) - v(S) \geq v(S' \cup 2) - v(S')$.
    
    If $2 \in S$,  and we have
    \begin{align*}
        v(S \cup 1) - v(S) &= f(x_1,x_2; \*x_{S \backslash 2}, \*x_{M \backslash (S \cup 1)}’) \\
        &- f(0,x_2; \*x_{S \backslash 2}, \*x_{M \backslash (S \cup 1)}’), \\
        v(S' \cup 2) - v(S') &= f(x_1,x_2; \*x_{S' \backslash 1}, \*x_{M \backslash (S' \cup 2)}’) \\
        &- f(x_1,0; \*x_{S' \backslash 1}, \*x_{M \backslash (S' \cup 2)}’).
    \end{align*}
     Since $S \backslash 2 = S' \backslash 1$, $S \cup 1 = S' \cup 2$, and $x_1=x_2$, by the definition of weak pairwise monotonicity, we have 
     \begin{align*}
         f(0,x_2; \*x_{S \backslash 2}, \*x_{M \backslash (S \cup 1)}’) \leq f(x_1,0; \*x_{S' \backslash 1}, \*x_{M \backslash (S' \cup 2)}’),
     \end{align*} 
     and therefore $v(S \cup 1) - v(S) \geq v(S' \cup 2) - v(S')$. 

     Since $v(S \cup 1) - v(S) \geq v(S' \cup 2) - v(S')$ for all $S$ with corresponding $S'$, we conclude that $\text{BS}_1 \geq \text{BS}_2$.

\end{proof}

This result also suggests baseline points. We need $x_{\beta}' = x_{\gamma}'$ in $\*x'$ in order to preserve AWPM. 
\begin{example}
    Consider $f(x_1,x_2) = 4.5x_1-x_1^2 + 4x_2 - x_2^2$ for $x_1, x_2$ in $[0,2]$. $f$ is weakly monotonic with respect to $x_1$ over $x_2$. Consider the baseline $\*x'=(0,0)$ and the explicand $\*x = (2,2)$. 
    
    For IG, we have
    \begin{align*}
        \textbf{IG}((2,2),(0,0),f) = 
        \left[ \begin{matrix} 5 \\ 4 \end{matrix} \right].
    \end{align*}
    It can be seen that $ \frac{1}{2} \text{IG}_1 > \frac{1}{2} \text{IG}_2$, IG preserves AWPM. Now suppose we consider $\*x' = (1,0)$, we have
    \begin{align*}
        \textbf{IG}((2,2),(1,0),f) = 
        \left[ \begin{matrix} 1.5 \\ 4 \end{matrix} \right]. 
    \end{align*}
    Since $\text{IG}_1 < \frac{1}{2} \text{IG}_2$, IG fails to preserve AWPM. The result of BShap is the same as that of $f$ being additively separable. 

    Consider Example~\ref{eg:WPM}. For $x_{\beta}'=x_{\gamma}'=0$, the weak pairwise monotonicity in $f$ guarantees that math is more important than verbal when they are equal when we determine the general importance of features. If, on the other hand, we take the average historical statistics of admitted students as our baseline, then it would be possible to see that verbal skills are more important than mathematics skills as admitted students are already competent in mathematics. It is important to keep in mind in this case that we are comparing the cases of previously admitted students, and the importance of features does not necessarily apply in general. 
    \end{example}

\subsection{Average Strong Pairwise Monotonicity}

When there is strong pairwise monotonicity, one feature is always more important than another. Consequently, when considering attributions, the average attribution of the more important ones should always be greater. In light of this, we should consider the following axiom. 

\begin{definition}[\textbf{Average Strong Pairwise Monotonicity (ASPM)}]
    Suppose $f$ is strongly monotonic with respect to $x_{\beta}$ over $x_{\gamma}$, $x_{\beta}>x_{\beta}'$, $x_{\gamma} > x_{\gamma}'$, and $\*x \in [\*a,\*b]$. Then we say a BAM preserves average strong pairwise monotonicity if 
    \begin{align}
        \frac{1}{x_{\beta}-x_{\beta}'} A_{\beta}(\*x,\*x',f) \geq \frac{1}{x_{\gamma}-x_{\gamma}'} A_{\gamma}(\*x,\*x',f).
    \end{align}
\end{definition}

\begin{example}
    Consider Example~\ref{eg:SPM}. As a general rule, we should always expect $x_{\beta}$ to contribute more than $x_{\gamma}$ on average, since $x_{\beta}$ is of greater importance. As an alternative, the interpretation may mislead people to believe that payments between one and two months past due can be more serious. 
\end{example}

Clearly, AWPM preserving is a special case of ASPM for $x_{\beta} = x_{\gamma}$. Thus, we have the following theorem.  

\begin{theorem}
    If a BAM preserves ASPM, it also preserves AWPM for the same features. 
\end{theorem}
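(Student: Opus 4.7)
The plan is to argue by a simple inclusion of hypotheses. First I would record that Definition~\ref{def:strong_mono} is strictly stronger than Definition~\ref{def:weak_mono}: the defining inequality in the strong case is required for all $\*x,\*x^*\in[\*a,\*b]$ with $c>0$, while the weak case imposes the extra constraint $x_\beta=x_\gamma$. Restricting the strong inequality to those $\*x$ with $x_\beta=x_\gamma$ recovers the weak inequality, so any $f$ that is strongly pairwise monotonic with respect to $x_\beta$ over $x_\gamma$ is automatically weakly pairwise monotonic with respect to the same pair.

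Next I would take any $f$ together with a baseline $\*x'$ and explicand $\*x$ satisfying the AWPM hypotheses, interpreting ``for the same features'' as restricting to those $f$ that are in fact strongly monotonic with respect to $(x_\beta,x_\gamma)$ (the context in which ASPM preservation applies). The ASPM hypotheses $x_\beta>x_\beta'$ and $x_\gamma>x_\gamma'$ are already included in the AWPM hypotheses, and no constraint on the explicand is imposed by ASPM, so the pair $(\*x,\*x')$ is in particular admissible for ASPM. Applying the assumed ASPM preservation at this $\*x$ yields
\[
\frac{1}{x_\beta-x_\beta'}\,A_\beta(\*x,\*x',f)\;\geq\;\frac{1}{x_\gamma-x_\gamma'}\,A_\gamma(\*x,\*x',f),
\]
which is syntactically identical to the AWPM inequality, completing the argument.

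I do not anticipate any genuine obstacle: the two attribution inequalities have exactly the same form, and the content of the proof is purely the observation that the AWPM setup is a special case of the ASPM setup once one notes that strong implies weak monotonicity. The only point where care is needed is the reading of the phrase ``for the same features'' -- the theorem should not be read as asserting AWPM preservation for every weakly (but not strongly) monotonic $f$, but rather on the class to which ASPM already applies, and the proof respects exactly this scope.
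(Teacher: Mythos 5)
Your proposal is correct and matches the paper's reasoning, which simply observes that AWPM is the special case of ASPM obtained by adding the constraint $x_{\beta}=x_{\gamma}$ (and that strong pairwise monotonicity implies weak). Your extra care about the scope of ``for the same features'' is a reasonable clarification but does not change the substance of the argument.
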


As a result of IG, ASPM is preserved and we provide the results.

\begin{theorem}\label{thm:IG_ASPM}
    IG preserves ASPM. 
\end{theorem}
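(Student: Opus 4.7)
The plan is to mirror the structure of the AWPM proof for IG but exploit the fact that strong pairwise monotonicity requires no magnitude constraint on $x_{\beta}$ and $x_{\gamma}$. First I would divide both IG attributions by their respective $(x_{\beta}-x_{\beta}')$ and $(x_{\gamma}-x_{\gamma}')$ factors (these are strictly positive by hypothesis) to obtain the normalized expressions
\begin{align*}
\frac{\text{IG}_{\beta}}{x_{\beta}-x_{\beta}'} &= \int_0^1 \frac{\partial f}{\partial x_{\beta}}\bigl(\*x'+t(\*x-\*x')\bigr)\,dt, \\
\frac{\text{IG}_{\gamma}}{x_{\gamma}-x_{\gamma}'} &= \int_0^1 \frac{\partial f}{\partial x_{\gamma}}\bigl(\*x'+t(\*x-\*x')\bigr)\,dt.
\end{align*}
Thus it suffices to establish a pointwise comparison of the two partial derivatives along the straight-line path from $\*x'$ to $\*x$.

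The key step is deriving this pointwise inequality from Definition~\ref{def:strong_mono}. At any point $\*y$ of differentiability and any $c>0$, strong pairwise monotonicity gives
\begin{align*}
f(y_{\beta}+c,y_{\gamma},\*y_{\neg}) - f(y_{\beta},y_{\gamma},\*y_{\neg}) \geq f(y_{\beta},y_{\gamma}+c,\*y_{\neg}) - f(y_{\beta},y_{\gamma},\*y_{\neg}).
\end{align*}
Dividing by $c$ and sending $c \to 0^+$ yields $\frac{\partial f}{\partial x_{\beta}}(\*y) \geq \frac{\partial f}{\partial x_{\gamma}}(\*y)$, and this holds pointwise, with no requirement that $y_{\beta}=y_{\gamma}$ (this is precisely where the strong hypothesis beats the weak one and makes the result independent of the choice of baseline, unlike AWPM). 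Crucially, this derivation does not require differentiability everywhere; it only uses the one-sided limit at points of differentiability, which under the paper's standing assumption (differentiable almost everywhere) suffices for the integrand along the path.

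Having established the pointwise inequality $\frac{\partial f}{\partial x_{\beta}} \geq \frac{\partial f}{\partial x_{\gamma}}$ on the path, monotonicity of the integral immediately gives
\begin{align*}
\int_0^1 \frac{\partial f}{\partial x_{\beta}}\bigl(\*x'+t(\*x-\*x')\bigr)\,dt \geq \int_0^1 \frac{\partial f}{\partial x_{\gamma}}\bigl(\*x'+t(\*x-\*x')\bigr)\,dt,
\end{align*}
which is the desired ASPM inequality after reinserting the normalizing factors.

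The main obstacle is the derivative-from-the-definition step: one must be careful that the definition of strong pairwise monotonicity permits the $c \to 0^+$ argument at almost every point of the line segment, and that this almost-everywhere information is enough to compare the two integrals. Since the set of non-differentiability has measure zero and the line-segment parameterization $t \mapsto \*x'+t(\*x-\*x')$ is a one-dimensional object, one should briefly note that the comparison of the integrands holds almost everywhere in $t$, which is all that is needed for the inequality of Lebesgue integrals to go through.
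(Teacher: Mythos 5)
Your proof is correct and follows essentially the same route as the paper's: a pointwise inequality $\frac{\partial f}{\partial x_{\beta}} \geq \frac{\partial f}{\partial x_{\gamma}}$ everywhere on $[\*a,\*b]$, followed by monotonicity of the integral along the straight-line path. The only difference is that you explicitly derive the derivative inequality from Definition~\ref{def:strong_mono} via the difference quotient and address the almost-everywhere differentiability issue, details the paper simply asserts.
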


\begin{proof}
    Suppose $f$ is strongly monotonic with respect to $x_{\beta}$ over $x_{\gamma}$, then $\frac{\partial f}{\partial x_{\gamma}}(\*x) \leq \frac{\partial f}{\partial x_{\beta}}(\*x)$, $\forall \*x \in [\*a,\*b]$. Hence,
    \begin{align*}
        \frac{1}{x_{\beta}-x_{\beta}'} \text{IG}_{\beta}  &= \int_0^1 \frac{\partial f}{\partial x_{\beta}} \left( \*x' + t(\*x-\*x') \right) \ dt  \\
        & \geq \int_0^1 \frac{\partial f}{\partial x_{\gamma}} \left( \*x' + t(\*x-\*x') \right) \ dt \\
        &= \frac{1}{x_{\gamma} - x_{\gamma}'} \text{IG}_{\gamma}.
    \end{align*}
\end{proof}

BShap doesn't preserve ASPM, as we documented here.  
\begin{remark}\label{remark:BShap_ASPM_fail}
    BShap doesn't preserve ASPM.
\end{remark}

A counterexample is provided here, as well as comparisons with the IG. 

\begin{example}
     Consider the function with two features $f(x_1,x_2) = \log(1+10x_1+9x_2)$, where $f$ is strongly monotonic with respect to $x_1$ over $x_2$. This example mimics the diminishing marginal effect, that is, the impact of $f$ decreases as $x_1+x_2$ increases. Consider the explicand $\*x = (4,1)$ and baseline $\*x' = (0,0)$. 
     
     By calculations, we have $\textbf{BS} \approx \left[ \begin{matrix} 2.7 \\ 1.3 \end{matrix} \right]$. It is evident that ASPM is not preserved. BShap requires a stronger condition to preserve the ASPM, for example, if the averaged change of $x_{\beta}$ always has a greater impact on $f$. Such a condition, however, is not realistic in applications such as credit scoring. In addition, we apply the IG, and we obtain $\textbf{IG} \approx \left[ \begin{matrix} 3.2 \\ 0.7 \end{matrix} \right]$. ASPM is successfully maintained. 

    Consider Example~\ref{eg:SPM}. Using BShap, at first glance, longer past dues appear to be contributing more to the result. On second thought, it seems that averaged longer past dues make a smaller contribution. Providing the customer with this interpretation would cause the customer to incorrectly believe that longer past dues have less impact, leading to a catastrophic outcome. 
     
\end{example}

\begin{remark}
    As discussed in Lemma 4.8 in \cite{sundararajan2020many}, BShap fits in the SHAP framework. Therefore, SHAP doesn't preserve ASPM as well. 
\end{remark}

\section{Empirical Examples}

\subsection{Data description}

We use the Kaggle credit score dataset \footnote{https://www.kaggle.com/c/GiveMeSomeCredit/overview}. For simplicity, data with missing variables are removed. Past dues greater than four times are truncated. The dataset is randomly partitioned into $75\%$ training and $25\%$ test sets. The dataset contains 9 features as explanatory variables, whereas the feature age is excluded from the original dataset to avoid potential discrimination. 
\begin{itemize}
\item $x_1$: Number of times borrower has been 90 days or more past due (integer). 
\item $x_2$: Number of times borrower has been 60-89 days past due but no worse in the last 2 years (integer).
\item $x_3$: Number of times borrower has been 30-59 days past due but no worse in the last 2 years (integer).
\item $x_4$: Total balance on credit cards and personal lines of credit except for real estate and no installment debt such as car loans divided by the sum of credit limits (percentage).
\item $x_5$: Monthly debt payments, alimony, and living costs divided by monthly gross income (percentage). 
\item $x_6$: Monthly income (real).
\item $x_7$: Number of open loans (installments such as car loan or mortgage) and lines of credit (e.g., credit cards) (integer). 
\item $x_8$: Number of mortgage and real estate loans including home equity lines of credit (integer). 
\item $x_{9}$: Number of dependents in the family, excluding themselves (spouse, children, etc.) (integer). 
\item $y$: Client's behavior; 1 = Person experienced 90 days past due delinquency or worse.
\end{itemize}
For simplicity, we consider the zero baseline as $\*x' = \*0$.

\subsection{Model Performance}

We apply black-box highly accurate fully-connected neural network (FCNN)  and transparent monotonic groves of neural additive model (MGNAM) \cite{chen2023address}. The FCNN is considered here due to its high degree of accuracy. The MGNAM model was chosen because (i) it is transparent, and therefore easy to analyze; (ii) monotonicity has been incorporated into the model. Accuracy is measured by the area under the curve (AUC). 

For FCNN, we use one hidden layer with ten neurons. For  MGNAM, we consider $g(\mathbb{E}[y|\*x]) = f(\*x)$ with the architecture
\begin{align}
    f(\*x) = f_{1,2,3}(x_1,x_2,x_3) + f_4(x_4) + \dots + f_9(x_9).
\end{align}
In other words, $x_1-x_3$ are grouped together, and the remaining features are handled using 1-dimensional functions. As a convenience, we only impose monotonicity on $x_1-x_3$. All functions are approximated by neural networks with one hidden layer and two neurons. For $x_1-x_3$, we enforce strong pairwise monotonicity. As a result of the domain knowledge, features that are longer past due should have greater importance. In this paper, we focus on simple architectures since there is no apparent improvement in accuracy for more complicated models, and accuracy is not the primary concern. 

Both models have an AUC of approximately $80\%$. This indicates that transparent models are capable of modeling this dataset. We demonstrate the application of IG and BShap to black-box FCNNs and then provide a more detailed analysis of transparent MGNAMs regarding pairwise monotonicity.

\subsection{FCNN Explanations}

We demonstrate the application of IG and BShap to black-box FCNN. We demonstrate how to interpret, debug, and avoid potential pitfalls with these tools.  
An example of the application of IG and BShap is presented for
\begin{align}
    \*x = \left[ \begin{matrix} 3 & 1 & 1 & 1 & 0.11 & 6250 & 3 & 0 & 2 \end{matrix} \right].
\end{align}
Attributions for $\text{IG}(\*x,\*x',\text{FCNN})$ and $\text{BShap}(\*x,\*x',\text{FCNN})$ are given in Figure~\ref{fig:IG_eg2} and Figure~\ref{fig:Bshap_eg2}. In general, they produce similar results. The following is a rough explanation of the result. Three times past due accounts with a lag time of more than 90 days significantly increase the risk of default. Further, each past due between 60-89 days and 30-59 days also increases the probability of default, but at a smaller magnitude. Having a positive monthly income decreases the probability and having two dependents increases it. 

As a first step, we consider average monotonicity preservation. AIM is preserved by both IG and BShap. In the case of pairwise monotonicity revealed by IG, however, average attributions of $x_1$ are lower than those of $x_2$. Based on Theorem~\ref{thm:IG_ASPM}, this indicates that strong pairwise monotonicity is violated here. Therefore, this FCNN should not be used for prediction in this scenario. Although BShap also observes a similar pattern, it cannot guarantee a violation of strong pairwise monotonicity, as indicated in Remark~\ref{remark:BShap_ASPM_fail}. In this regard, IG has an advantage over pairwise monotonicity. If ASPM is violated in IG, then there must be a problem with the model.

\begin{figure}
    \centering
    \includegraphics[scale=0.4]{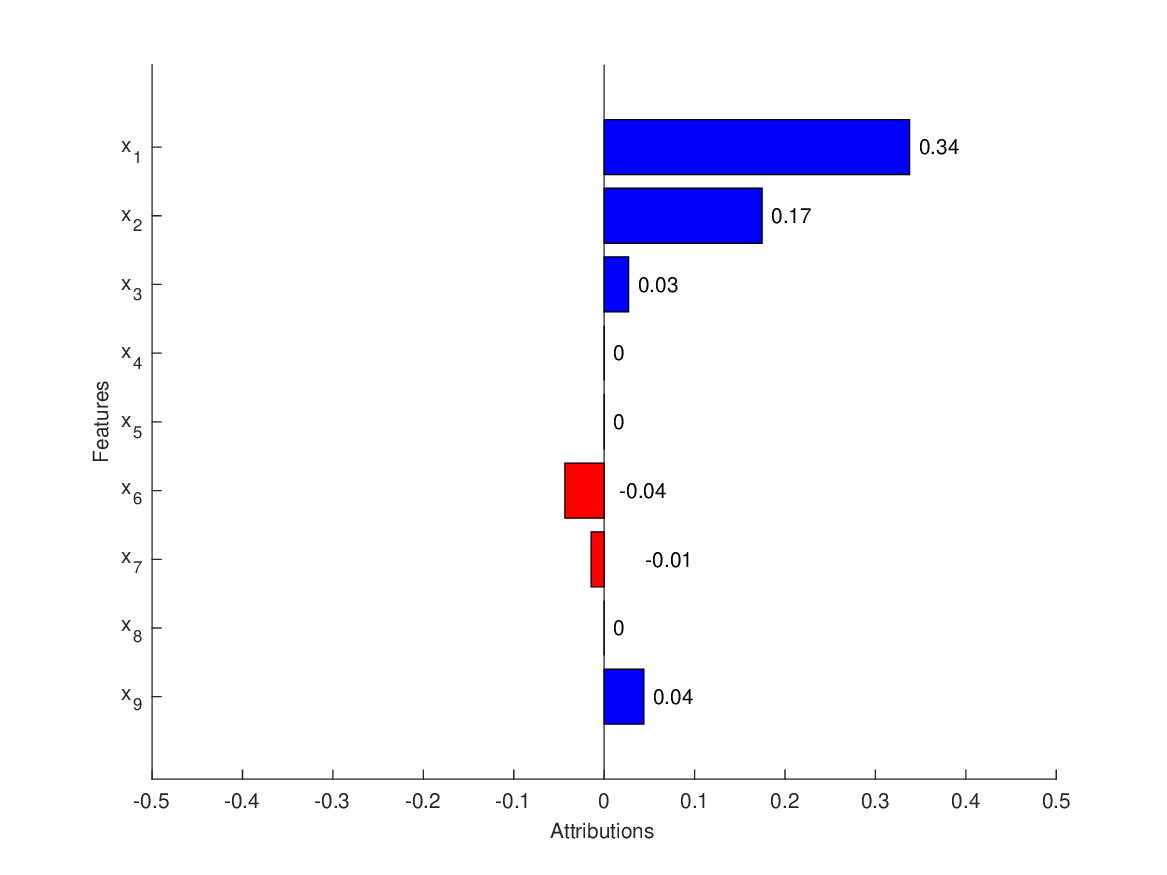}
    \caption{Instance explanation by IG for FCNN.  }
    \label{fig:IG_eg2}
\end{figure}

\begin{figure}
    \centering
    \includegraphics[scale=0.4]{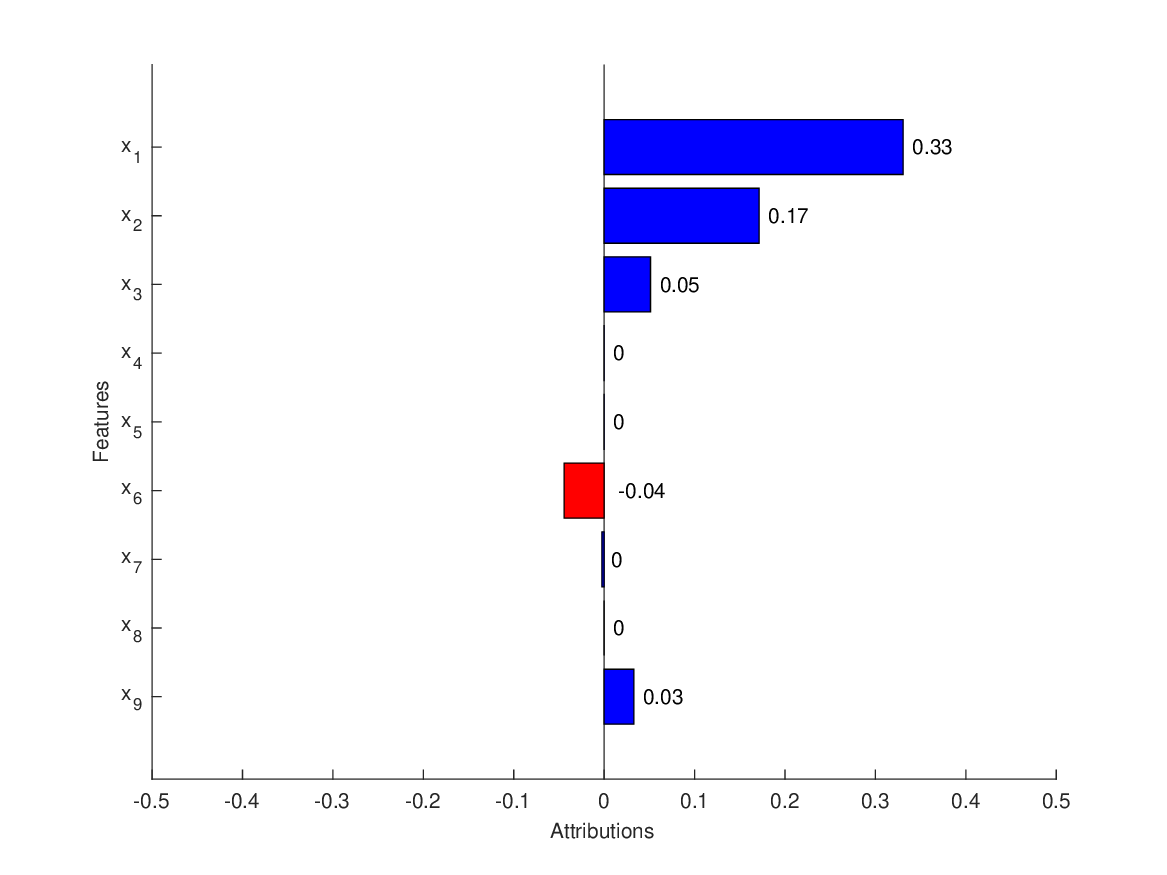}
    \caption{Instance explanation by BShap for FCNN.  }
    \label{fig:Bshap_eg2}
\end{figure}

Next, we consider DIM preserving. We perturb $\*x$ by increasing $x_3$ by one unit,
\begin{align}
    \*x^* = \left[ \begin{matrix} 3 & 1 & 2 & 1 & 0.11 & 6250 & 3 & 0 & 2 \end{matrix} \right].
\end{align}
In principle, we would expect to observe an increase in the probability of default. There is, however, a decrease of 0.015 in the probability of default. As a result, the model does not adequately preserve individual monotonicity. We then recompute IG and BShap in Figure~\ref{fig:IG_eg3} and Figure~\ref{fig:Bshap_eg3}. IG and BShap continue to provide positive attributions, suggesting that the model performs well on average. Note that $\text{IG}_3(\*x^*,\*x',\text{FCNN})$ decreases. In spite of the fact that IG indicates that there may be a problem with the model, we cannot conclude whether a problem exists, based on Remark~\ref{remark:IG_DIM_fail}. Alternatively, $\text{BS}_3(\*x^*,\*x',\text{FCNN})$ increases, despite the decrease in the probability of default. BShap, unfortunately, does not identify the model violation. As a result of these results, although model explanations could help us debug, they cannot be relied upon entirely, and it would be better to begin with the monotonic model instead. 

\begin{figure}
    \centering
    \includegraphics[scale=0.4]{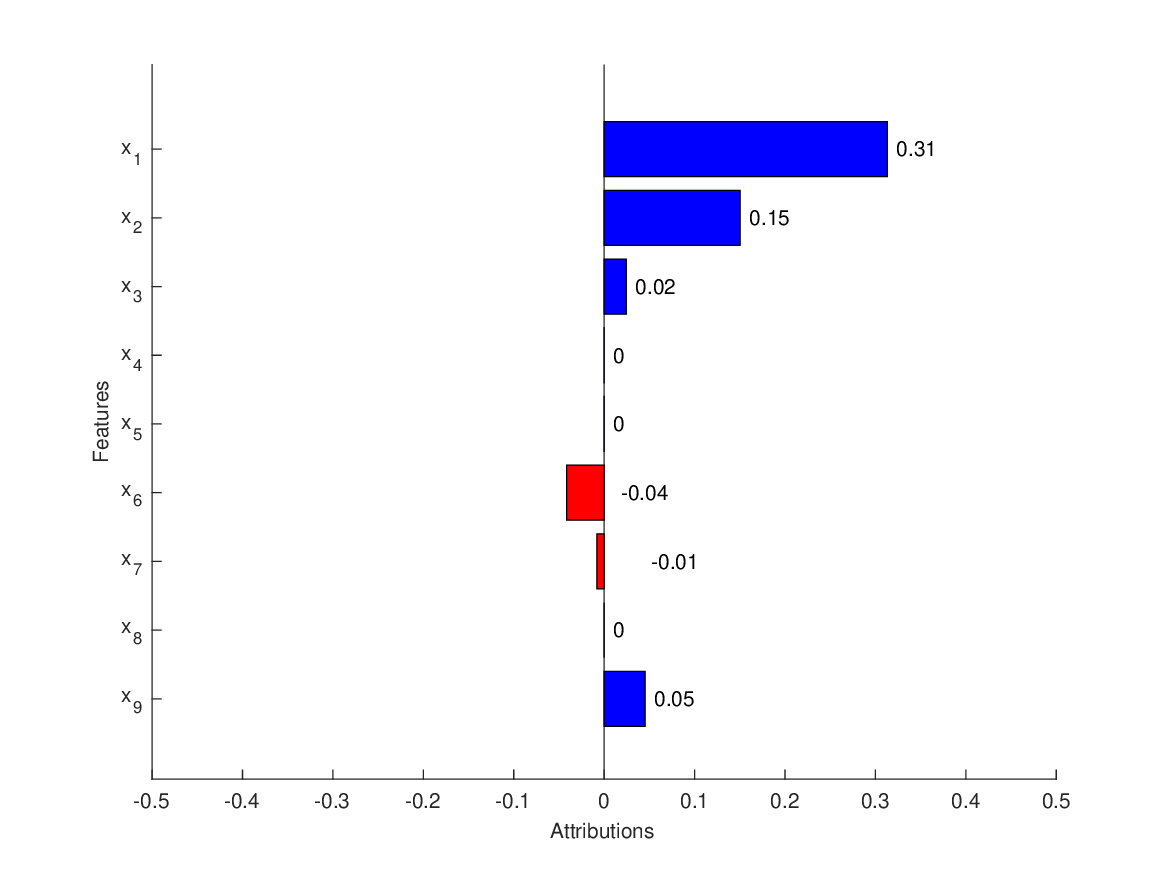}
    \caption{Instance explanation for the perturbed explicand (adding one $x_3$) by IG for FCNN. }
    \label{fig:IG_eg3}
\end{figure}

\begin{figure}
    \centering
    \includegraphics[scale=0.4]{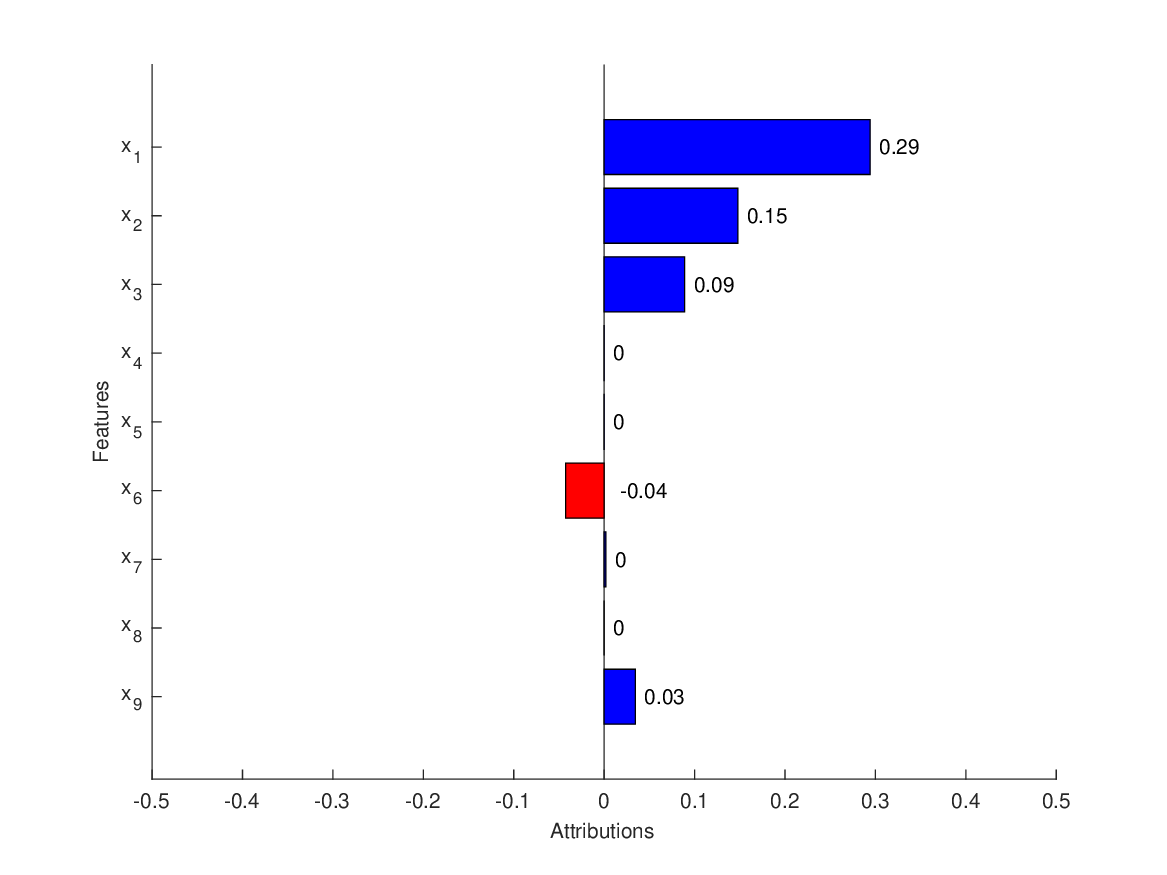}
    \caption{Instance explanation for the perturbed explicand (adding one $x_3$) by BShap for FCNN.  }
    \label{fig:Bshap_eg3}
\end{figure}

\subsection{MGNAM Explanations}

We now check our results using MGNAM, where monotonicity is enforced. Additionally, MGNAM is transparent, which allows us to examine explanations in greater detail. 
Due to the fact that $x_4-x_9$ are modeled by one-dimensional functions, attributions only represent values at the explicand, so they are not very interesting. Because of this, we focus on the result for $x_1-x_3$ and our focus is on $[0,2]$ for simplicity's sake. Results by IG and BShap for all cases are plotted in Figure~\ref{fig:IG_eg4} and Figure~\ref{fig:Bshap_eg4}. Overall, they perform similarly. All individual monotonicity is satisfied on average. However, Bshap can be problematic when attempting to achieve pairwise monotonicity. In the case of $(2,1,0)$, $\frac{1}{2} \text{BS}_1 < \text{BS}_2$, ASPM is violated by BShap. Meanwhile, IG preserves ASPM. Furthermore, we would expect to see potential more violations by BShap for larger values of $x_1-x_3$. Here, MGNAM has already preserved monotonicity. IG provides appropriate explanations, whereas BShap provides misleading explanations. Due to this, IG appears to be superior in explaining pairwise monotonicity. 




\begin{figure}
    \centering
    \includegraphics[scale=0.4]{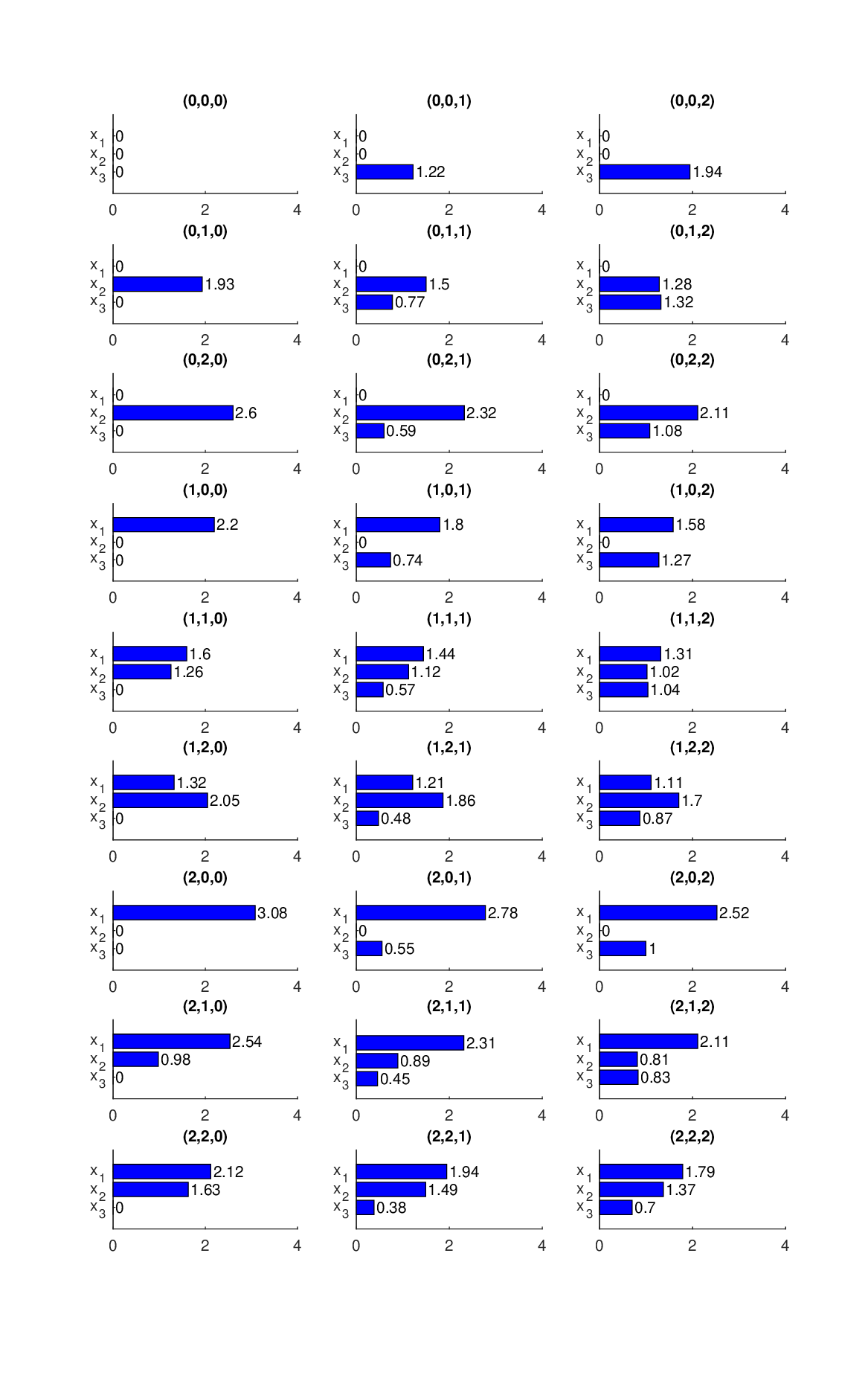}
    \caption{IG explanation for MGNAM for features $x_1-x_3$ in $[0,2]$. X-axis are attributions and y-axis are features. }
    \label{fig:IG_eg4}
\end{figure}

\begin{figure}
    \centering
    \includegraphics[scale=0.4]{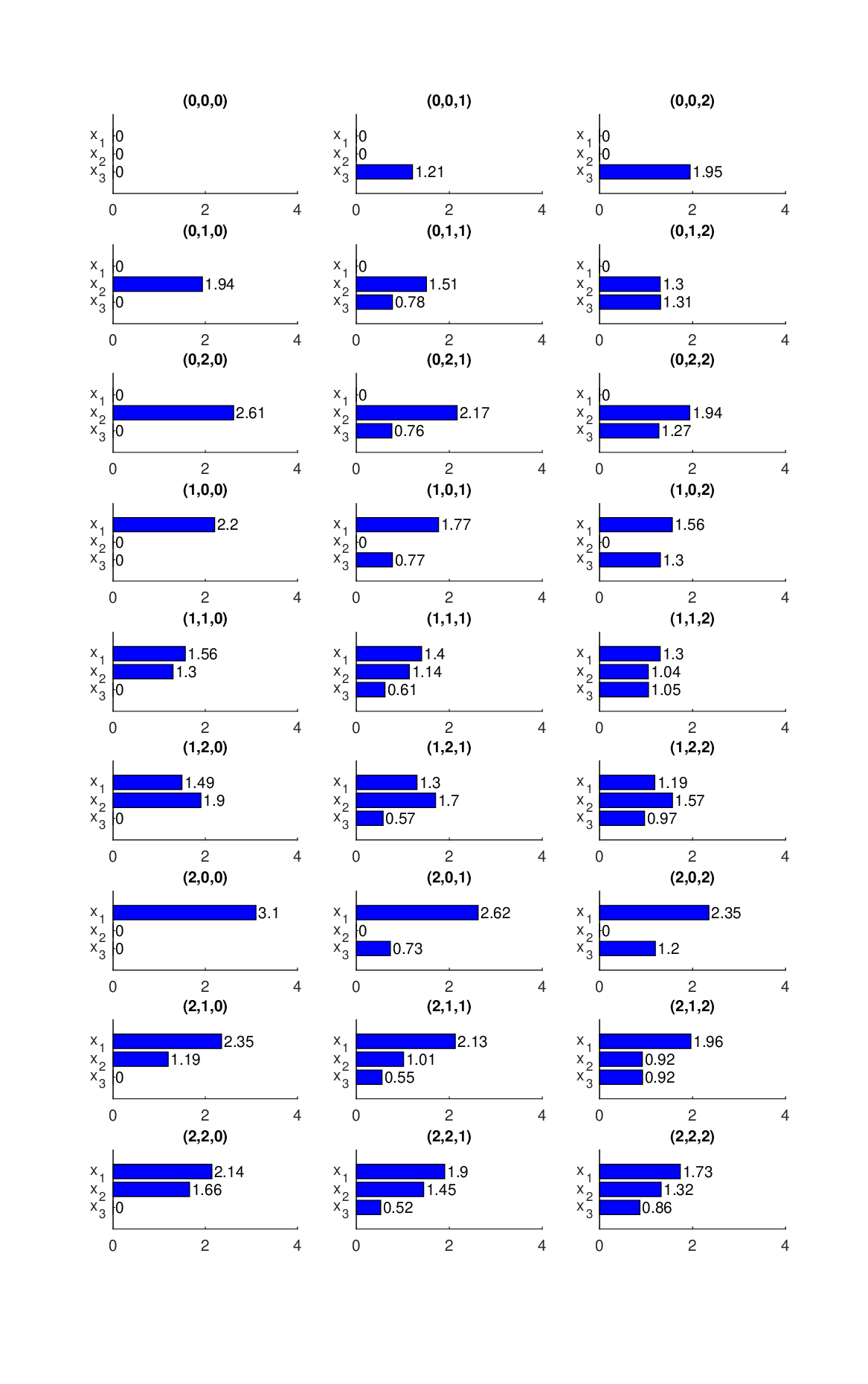}
    \caption{BShap explanation for MGNAM for features $x_1-x_3$ in $[0,2]$. X-axis are attributions and y-axis are features. At $(2,1,0)$, average strong pairwise monotonicity is violated. }
    \label{fig:Bshap_eg4}
\end{figure}

\section{Conclusion}

We investigate attribution methods for monotonic models in this paper. New axioms are proposed which must be satisfied by attribution methods. We analyze IG and BShap methods. By analyzing and illustrating, we demonstrate that BShap provides satisfactory explanations when only individual monotonicity is involved, but IG provides reasonable explanations when strong pairwise monotonicity is involved. 

In the future, we would like to pursue several directions. (1) We have focused on monotonicity in this paper, but it is necessary to investigate more domain knowledge in order to gain a better understanding of the behavior of ML models. (2) According to our analysis, neither IG nor BShap satisfies all monotonic axioms. It is necessary to modify existing methods or propose new methods in order to meet monotonicity requirements. (3) IG satisfies all monotonicity axioms on average, however, it is necessary that the underlying models are differentiable. Additionally, features such as the number of defaults are naturally discrete. Although IG can still be applied in this case using neural networks, it uses function values at points that are not useful in practice. For this reason, it is desirable to have a modified Shapley value method that is able to preserve strong pairwise monotonicity. 

In general, we believe that domain knowledge, such as monotonicity, plays an essential role in the interpretation of models. In sectors with high stakes, such as the finance sector, we must take these factors into account when we explain the models. 

\bibliographystyle{ACM-Reference-Format}
\bibliography{sample-base}


\begin{thebibliography}{23}


\ifx \showCODEN    \undefined \def \showCODEN     #1{\unskip}     \fi
\ifx \showDOI      \undefined \def \showDOI       #1{#1}\fi
\ifx \showISBNx    \undefined \def \showISBNx     #1{\unskip}     \fi
\ifx \showISBNxiii \undefined \def \showISBNxiii  #1{\unskip}     \fi
\ifx \showISSN     \undefined \def \showISSN      #1{\unskip}     \fi
\ifx \showLCCN     \undefined \def \showLCCN      #1{\unskip}     \fi
\ifx \shownote     \undefined \def \shownote      #1{#1}          \fi
\ifx \showarticletitle \undefined \def \showarticletitle #1{#1}   \fi
\ifx \showURL      \undefined \def \showURL       {\relax}        \fi
\providecommand\bibfield[2]{#2}
\providecommand\bibinfo[2]{#2}
\providecommand\natexlab[1]{#1}
\providecommand\showeprint[2][]{arXiv:#2}

\bibitem[OCC(2021)]%
        {OCC2021model}
 \bibinfo{year}{2021}\natexlab{}.
\newblock \bibinfo{booktitle}{\emph{Model risk management}}.
\newblock
\urldef\tempurl%
\url{https://www.occ.treas.gov/publications-and-resources/publications/comptrollers-handbook/files/model-risk-management/index-model-risk-management.html}
\showURL{%
\tempurl}


\bibitem[Alam et~al\mbox{.}(2022)]%
        {alam2022applications}
\bibfield{author}{\bibinfo{person}{Md~Shafiul Alam}, \bibinfo{person}{Jonathan
  Boardman}, \bibinfo{person}{Xiao Huang}, {and} \bibinfo{person}{Matthew
  Turner}.} \bibinfo{year}{2022}\natexlab{}.
\newblock \showarticletitle{Applications of Integrated Gradients in Credit Risk
  Modeling}. In \bibinfo{booktitle}{\emph{2022 IEEE International Conference on
  Big Data (Big Data)}}. IEEE, \bibinfo{pages}{4997--5005}.
\newblock


\bibitem[Boardman et~al\mbox{.}(2022)]%
        {boardman2022integrated}
\bibfield{author}{\bibinfo{person}{Jonathan Boardman},
  \bibinfo{person}{Md~Shafiul Alam}, \bibinfo{person}{Xiao Huang}, {and}
  \bibinfo{person}{Ying Xie}.} \bibinfo{year}{2022}\natexlab{}.
\newblock \showarticletitle{Integrated Gradients is a Nonlinear Generalization
  of the Industry Standard Approach to Variable Attribution for Credit Risk
  Models}. In \bibinfo{booktitle}{\emph{2022 IEEE International Conference on
  Big Data (Big Data)}}. IEEE, \bibinfo{pages}{5012--5023}.
\newblock


\bibitem[Bussmann et~al\mbox{.}(2021)]%
        {bussmann2021explainable}
\bibfield{author}{\bibinfo{person}{Niklas Bussmann}, \bibinfo{person}{Paolo
  Giudici}, \bibinfo{person}{Dimitri Marinelli}, {and} \bibinfo{person}{Jochen
  Papenbrock}.} \bibinfo{year}{2021}\natexlab{}.
\newblock \showarticletitle{Explainable machine learning in credit risk
  management}.
\newblock \bibinfo{journal}{\emph{Computational Economics}}
  \bibinfo{volume}{57} (\bibinfo{year}{2021}), \bibinfo{pages}{203--216}.
\newblock


\bibitem[Chen and Ye(2022)]%
        {chen2022monotonic}
\bibfield{author}{\bibinfo{person}{Dangxing Chen} {and}
  \bibinfo{person}{Weicheng Ye}.} \bibinfo{year}{2022}\natexlab{}.
\newblock \showarticletitle{Monotonic Neural Additive Models: Pursuing
  Regulated Machine Learning Models for Credit Scoring}. In
  \bibinfo{booktitle}{\emph{Proceedings of the Third ACM International
  Conference on AI in Finance}}. \bibinfo{pages}{70--78}.
\newblock


\bibitem[Chen and Ye(2023)]%
        {chen2023address}
\bibfield{author}{\bibinfo{person}{Dangxing Chen} {and}
  \bibinfo{person}{Weicheng Ye}.} \bibinfo{year}{2023}\natexlab{}.
\newblock \showarticletitle{How to address monotonicity for model risk
  management?}. In \bibinfo{booktitle}{\emph{Proceedings of the 40th
  International Conference on Machine Learning}}
  \emph{(\bibinfo{series}{Proceedings of Machine Learning Research},
  Vol.~\bibinfo{volume}{202})}. \bibinfo{publisher}{PMLR},
  \bibinfo{pages}{5282--5295}.
\newblock


\bibitem[Cotter et~al\mbox{.}(2019)]%
        {cotter2019shape}
\bibfield{author}{\bibinfo{person}{Andrew Cotter}, \bibinfo{person}{Maya
  Gupta}, \bibinfo{person}{Heinrich Jiang}, \bibinfo{person}{Erez Louidor},
  \bibinfo{person}{James Muller}, \bibinfo{person}{Tamann Narayan},
  \bibinfo{person}{Serena Wang}, {and} \bibinfo{person}{Tao Zhu}.}
  \bibinfo{year}{2019}\natexlab{}.
\newblock \showarticletitle{Shape constraints for set functions}. In
  \bibinfo{booktitle}{\emph{International conference on machine learning}}.
  PMLR, \bibinfo{pages}{1388--1396}.
\newblock


\bibitem[Friedman and Moulin(1999)]%
        {friedman1999three}
\bibfield{author}{\bibinfo{person}{Eric Friedman} {and} \bibinfo{person}{Herve
  Moulin}.} \bibinfo{year}{1999}\natexlab{}.
\newblock \showarticletitle{Three methods to share joint costs or surplus}.
\newblock \bibinfo{journal}{\emph{Journal of economic Theory}}
  \bibinfo{volume}{87}, \bibinfo{number}{2} (\bibinfo{year}{1999}),
  \bibinfo{pages}{275--312}.
\newblock


\bibitem[Greydanus et~al\mbox{.}(2019)]%
        {greydanus2019hamiltonian}
\bibfield{author}{\bibinfo{person}{Samuel Greydanus}, \bibinfo{person}{Misko
  Dzamba}, {and} \bibinfo{person}{Jason Yosinski}.}
  \bibinfo{year}{2019}\natexlab{}.
\newblock \showarticletitle{Hamiltonian neural networks}.
\newblock \bibinfo{journal}{\emph{Advances in neural information processing
  systems}}  \bibinfo{volume}{32} (\bibinfo{year}{2019}).
\newblock


\bibitem[Gupta et~al\mbox{.}(2020)]%
        {gupta2020multidimensional}
\bibfield{author}{\bibinfo{person}{Maya Gupta}, \bibinfo{person}{Erez Louidor},
  \bibinfo{person}{Oleksandr Mangylov}, \bibinfo{person}{Nobu Morioka},
  \bibinfo{person}{Taman Narayan}, {and} \bibinfo{person}{Sen Zhao}.}
  \bibinfo{year}{2020}\natexlab{}.
\newblock \showarticletitle{Multidimensional shape constraints}. In
  \bibinfo{booktitle}{\emph{International Conference on Machine Learning}}.
  PMLR, \bibinfo{pages}{3918--3928}.
\newblock


\bibitem[Horel and Giesecke(2020)]%
        {horel2020significance}
\bibfield{author}{\bibinfo{person}{Enguerrand Horel} {and} \bibinfo{person}{Kay
  Giesecke}.} \bibinfo{year}{2020}\natexlab{}.
\newblock \showarticletitle{Significance tests for neural networks}.
\newblock \bibinfo{journal}{\emph{Journal of Machine Learning Research}}
  \bibinfo{volume}{21}, \bibinfo{number}{227} (\bibinfo{year}{2020}),
  \bibinfo{pages}{1--29}.
\newblock


\bibitem[Horel et~al\mbox{.}(2018)]%
        {horel2018}
\bibfield{author}{\bibinfo{person}{Enguerrand Horel}, \bibinfo{person}{Virgile
  Mison}, \bibinfo{person}{Tao Xiong}, \bibinfo{person}{Kay Giesecke}, {and}
  \bibinfo{person}{Lidia Mangu}.} \bibinfo{year}{2018}\natexlab{}.
\newblock \showarticletitle{Sensitivity based Neural Networks Explanations}.
\newblock \bibinfo{journal}{\emph{arXiv preprint arXiv:1812.01029}}
  (\bibinfo{year}{2018}).
\newblock


\bibitem[Karniadakis et~al\mbox{.}(2021)]%
        {karniadakis2021physics}
\bibfield{author}{\bibinfo{person}{George~Em Karniadakis},
  \bibinfo{person}{Ioannis~G Kevrekidis}, \bibinfo{person}{Lu Lu},
  \bibinfo{person}{Paris Perdikaris}, \bibinfo{person}{Sifan Wang}, {and}
  \bibinfo{person}{Liu Yang}.} \bibinfo{year}{2021}\natexlab{}.
\newblock \showarticletitle{Physics-informed machine learning}.
\newblock \bibinfo{journal}{\emph{Nature Reviews Physics}} \bibinfo{volume}{3},
  \bibinfo{number}{6} (\bibinfo{year}{2021}), \bibinfo{pages}{422--440}.
\newblock


\bibitem[Liu et~al\mbox{.}(2020)]%
        {liu2020certified}
\bibfield{author}{\bibinfo{person}{Xingchao Liu}, \bibinfo{person}{Xing Han},
  \bibinfo{person}{Na Zhang}, {and} \bibinfo{person}{Qiang Liu}.}
  \bibinfo{year}{2020}\natexlab{}.
\newblock \showarticletitle{Certified monotonic neural networks}.
\newblock \bibinfo{journal}{\emph{Advances in Neural Information Processing
  Systems}}  \bibinfo{volume}{33} (\bibinfo{year}{2020}),
  \bibinfo{pages}{15427--15438}.
\newblock


\bibitem[Lundberg and Lee(2017)]%
        {lundberg2017unified}
\bibfield{author}{\bibinfo{person}{Scott~M Lundberg} {and}
  \bibinfo{person}{Su-In Lee}.} \bibinfo{year}{2017}\natexlab{}.
\newblock \showarticletitle{A unified approach to interpreting model
  predictions}.
\newblock \bibinfo{journal}{\emph{Advances in neural information processing
  systems}}  \bibinfo{volume}{30} (\bibinfo{year}{2017}).
\newblock


\bibitem[Lundstrom et~al\mbox{.}(2022)]%
        {lundstrom2022rigorous}
\bibfield{author}{\bibinfo{person}{Daniel~D Lundstrom},
  \bibinfo{person}{Tianjian Huang}, {and} \bibinfo{person}{Meisam Razaviyayn}.}
  \bibinfo{year}{2022}\natexlab{}.
\newblock \showarticletitle{A rigorous study of integrated gradients method and
  extensions to internal neuron attributions}. In
  \bibinfo{booktitle}{\emph{International Conference on Machine Learning}}.
  PMLR, \bibinfo{pages}{14485--14508}.
\newblock


\bibitem[Milani~Fard et~al\mbox{.}(2016)]%
        {milani2016fast}
\bibfield{author}{\bibinfo{person}{Mahdi Milani~Fard}, \bibinfo{person}{Kevin
  Canini}, \bibinfo{person}{Andrew Cotter}, \bibinfo{person}{Jan Pfeifer},
  {and} \bibinfo{person}{Maya Gupta}.} \bibinfo{year}{2016}\natexlab{}.
\newblock \showarticletitle{Fast and flexible monotonic functions with
  ensembles of lattices}.
\newblock \bibinfo{journal}{\emph{Advances in neural information processing
  systems}}  \bibinfo{volume}{29} (\bibinfo{year}{2016}).
\newblock


\bibitem[Repetto(2022)]%
        {repetto2022multicriteria}
\bibfield{author}{\bibinfo{person}{Marco Repetto}.}
  \bibinfo{year}{2022}\natexlab{}.
\newblock \showarticletitle{Multicriteria interpretability driven deep
  learning}.
\newblock \bibinfo{journal}{\emph{Annals of Operations Research}}
  (\bibinfo{year}{2022}), \bibinfo{pages}{1--15}.
\newblock


\bibitem[Ribeiro et~al\mbox{.}(2016)]%
        {ribeiro2016should}
\bibfield{author}{\bibinfo{person}{Marco~Tulio Ribeiro},
  \bibinfo{person}{Sameer Singh}, {and} \bibinfo{person}{Carlos Guestrin}.}
  \bibinfo{year}{2016}\natexlab{}.
\newblock \showarticletitle{" Why should i trust you?" Explaining the
  predictions of any classifier}. In \bibinfo{booktitle}{\emph{Proceedings of
  the 22nd ACM SIGKDD international conference on knowledge discovery and data
  mining}}. \bibinfo{pages}{1135--1144}.
\newblock


\bibitem[Ribeiro et~al\mbox{.}(2018)]%
        {ribeiro2018anchors}
\bibfield{author}{\bibinfo{person}{Marco~Tulio Ribeiro},
  \bibinfo{person}{Sameer Singh}, {and} \bibinfo{person}{Carlos Guestrin}.}
  \bibinfo{year}{2018}\natexlab{}.
\newblock \showarticletitle{Anchors: High-precision model-agnostic
  explanations}. In \bibinfo{booktitle}{\emph{Proceedings of the AAAI
  conference on artificial intelligence}}, Vol.~\bibinfo{volume}{32}.
\newblock


\bibitem[Sundararajan and Najmi(2020)]%
        {sundararajan2020many}
\bibfield{author}{\bibinfo{person}{Mukund Sundararajan} {and}
  \bibinfo{person}{Amir Najmi}.} \bibinfo{year}{2020}\natexlab{}.
\newblock \showarticletitle{The many Shapley values for model explanation}. In
  \bibinfo{booktitle}{\emph{International conference on machine learning}}.
  PMLR, \bibinfo{pages}{9269--9278}.
\newblock


\bibitem[Sundararajan et~al\mbox{.}(2017)]%
        {sundararajan2017axiomatic}
\bibfield{author}{\bibinfo{person}{Mukund Sundararajan}, \bibinfo{person}{Ankur
  Taly}, {and} \bibinfo{person}{Qiqi Yan}.} \bibinfo{year}{2017}\natexlab{}.
\newblock \showarticletitle{Axiomatic attribution for deep networks}. In
  \bibinfo{booktitle}{\emph{International conference on machine learning}}.
  PMLR, \bibinfo{pages}{3319--3328}.
\newblock


\bibitem[You et~al\mbox{.}(2017)]%
        {you2017deep}
\bibfield{author}{\bibinfo{person}{Seungil You}, \bibinfo{person}{David Ding},
  \bibinfo{person}{Kevin Canini}, \bibinfo{person}{Jan Pfeifer}, {and}
  \bibinfo{person}{Maya Gupta}.} \bibinfo{year}{2017}\natexlab{}.
\newblock \showarticletitle{Deep lattice networks and partial monotonic
  functions}.
\newblock \bibinfo{journal}{\emph{Advances in neural information processing
  systems}}  \bibinfo{volume}{30} (\bibinfo{year}{2017}).
\newblock


\end{thebibliography}

\end{document}